\newtheorem{The}{Theorem}
\begin{document}
%
\title{Global Hashing System for Fast Image Search}


\author{\IEEEauthorblockN{Dayong Tian and
Dacheng Tao,~\IEEEmembership{Fellow,~IEEE}}
\thanks{The authors are with the Centre for Quantum Computation \& Intelligent Systems and the Faculty of Engineering and Information Technology, University of Technology Sydney, 81 Broadway Street, Ultimo, NSW 2007, Australia (email: dayong.tian@student.uts.edu.au, dacheng.tao@uts.edu.au). Corresponding author: D. Tian (email: dayong.tian@student.uts.edu.au).\copyright 2016 IEEE. Personal use of this material is permitted. Permission from IEEE must be obtained for all other uses, in any current or future media, including reprinting/republishing this material for advertising or promotional purposes, creating new collective works, for resale or redistribution to servers or lists, or reuse of any copyrighted component of this work in other works.}\thanks{\copyright 2017 IEEE. Personal use of this material is permitted. Permission from IEEE must be obtained for all other uses, in any current or future media, including reprinting/republishing this material for advertising or promotional purposes, creating new collective works, for resale or redistribution to servers or lists, or reuse of any copyrighted component of this work in other works.}}

\IEEEtitleabstractindextext{%
\begin{abstract}
Hashing methods have been widely investigated for fast approximate nearest neighbor searching in large datasets. Most existing methods use binary vectors in lower dimensional spaces to represent data points, which are usually real vectors of higher dimensionality. However, according to Shannon's Source Coding Theorem (SSCT) in information theory, it is logical to represent low-dimensional real vectors with high-dimensional binary vectors, since a binary bit contains less information than a real number. We design a novel hashing method based on this principle. Data points are first embedded in a low-dimensional space, and then the Global Positioning System (GPS) method is introduced but modified for hashing. We devise data-independent and data-dependent methods to distribute the ``satellites'' at appropriate locations. Benefitting from the rationale of SSCT and rules on distributing satellites in a GPS, our data-dependent method outperforms other methods in different-sized datasets from 100K to 10M. By incorporating the orthogonality of the code matrix, both our data-independent and data-dependent methods are particularly impressive in experiments on longer bits.
\end{abstract}

\begin{IEEEkeywords}
Hashing, image retrieval, Global Positioning System.
\end{IEEEkeywords}}

\maketitle

\IEEEdisplaynontitleabstractindextext
\IEEEpeerreviewmaketitle

\section{Introduction}

Hashing methods are efficient for approximate nearest neighbor (ANN) searching, which is important in computer vision ~\cite{CV1}\cite{CV2}\cite{CV3}\cite{Tiny} and machine learning~\cite{ML1}\cite{ML2}\cite{ML3}\cite{ML4}. Hashing methods map original input data points to binary hash codes while preserving their mutual distances; that is, the binary strings of similar data points in the original feature space should have low Hamming distances. Hashing with short codes can substantially reduce storage requirements and boost the ANN searching speed.\medskip\\
\indent Popular hashing methods can be categorized into two groups according to their dependence on data. The most well-known data-independent hashing methods are Locality-Sensitive Hashing (LSH)~\cite{LSH} and its variances, \emph{e.g.}, those adopting cosine similarity~\cite{cos_sim} and kernel similarity~\cite{KLSH}. The main drawback of these methods is the demand of more bits per hashing table, due to randomized hashing~\cite{IMH}.\medskip\\
\indent Data-dependent methods have become popular in the machine learning community. Spectral Hashing (SH)~\cite{SH}, one of the most popular data-dependent methods, generate hashing codes by solving the relaxed mathematical problem to circumvent the computation of pairwise distances in the whole dataset, i.e, the affinity matrix and the constraints that lead a NP-hard problem. Anchor Graph Hashing (AGH)~\cite{AGH} optimizes the object function of SH by using anchor points to construct a highly sparse affinity matrix. Discrete Graph Hashing (DGH)~\cite{DGH} follows this idea and incorporates the orthogonality of hashing code matrix. There are also methods based on linear projections of Principal Component Analysis (PCA)~\cite{ITQ}\cite{IsoH}\cite{PCAHashing} or Linear Discriminant Analysis~\cite{LDAHashing} and those hashing in kernel space, such as binary reconstructive embeddings (BRE)~\cite{BRE}, random maximum margin hashing (RMMH)~\cite{RMMH} and kernel-based supervised hashing (KSH)~\cite{CV2}. Unlike the ITQ that rotates the projection matrix obtained by PCA to minimize the loss function, the Neighborhood Discriminant Hashing (NDH)~\cite{NDH} incorporate the computation of the projection matrix during the minimization procedure. In general, the linear dimensionality reduction techniques, such as PCA, is inferior to nonlinear manifold learning methods which are able to more effectively preserve the local structure of the input data without assuming global linearity~\cite{IMH_Motivation}. However, the nonlinear manifold techniques may be intractable for large datasets because of their high computation costs. To address this problem, Inductive Manifold Hashing (IMH)~\cite{IMH}\cite{IMH2} learns the nonlinear manifold on a small subset and inductively insert the remainder of data. Besides, hashing methods focus on the image representations have been developed recently. For example, RZhang~\emph{et al.}~\cite{TIP_Guanshui1} unifies the feature extraction and the hashing function learning. Zhang~\emph{et al.}~\cite{TIP_Guanshui2} and Liu~\emph{et al}~\cite{TIP_Guanshui3} develop their methods on multiple representations. \\
\indent However, the main theoretical deficit in the data-dependent methods is that they fail to conform to Shannon's Source Coding Theorem (SSCT)~\cite{IT}. In practice, an image in the dataset is usually represented by a descriptor, \emph{e.g.}, SIFT~\cite{SIFT} or GIST~\cite{GIST} descriptor with more than 128-dimensional 8-bit characters or 32-bit single real numbers in a computer. In information theory~\cite{IT}, entropy is the average amount of information contained in a message, which, in this context, refers to a descriptor vector or binary code vector. According to SSCT, the code length should be no less than the Shannon entropy of original data points. Without ambiguity in this paper, entropy refers to Shannon entropy. The entropy is defined as    $H\left( \Xi  \right) =  - \sum\nolimits_i {P\left( {\Xi  = {\xi _i}} \right){{\log }_2}P\left( {\Xi  = {\xi _i}} \right)} $, where $\Xi$  is a random variable and $P\left( {\Xi  = {\xi _i}} \right)$ is the probability of $\Xi  = {\xi _i}$. For instance, by assuming uniform distribution, the entropy of a 64-dimensional 8-bit character vector is 512, which means 512-bit binary strings are needed.\medskip\\
\indent Exploiting this principle, we first reduce the dimensionality of the original data points, i.e., the descriptor vectors, by PCA. Then, the projections on the first $d$ principle components are encoded by $c$-dimensional binary code, where $c>d$. Hence, we need an over-determined system that can uniquely position every data point. This is similar to Global Positioning Systems (GPS)~\cite{GPS}, which use dozens of satellites to position a receiver on the $2D$ Earth surface. Since our method is directly inspired by GPS, we name it the Global Hashing System (GHS). We tackle the major issue of how to distribute satellites and propose two methods: one data-dependent method and one data-independent method. Unlike most existing methods~\cite{ITQ}\cite{SH}\cite{PCAHashing} that handle the degraded version of orthogonality of code matrix in continuous domain, both our methods approximate the orthogonal code matrix directly in binary domain, which leads better performance on long-bit experiments. Note that although SH can be regarded as assigning more bits to PCA directions along which the data have greater ranges, it is somewhat heuristic~\cite{ITQ}.\medskip\\
\indent After the satellites are well distributed, the distances from data points to each satellite (to simplify following discussion, this distance is denoted as D2S hereafter) are sorted separately. The nearest half is denoted as -1 while the other half is denoted as 1. Hence, our method can generate balanced code matrix easily. Although a balanced code matrix is considered to be one of the two conditions for good codes~\cite{SH}, it is rarely considered because it usually results in a NP-hard problem.

\section{Methodology}

Let us define the used notations. A set of $n$ data points in a $D$-dimensional space is represented by $\left\{ {{{\bf{x}}_1},...,{{\bf{x}}_n}} \right\}$, ${{\bf{x}}_i} \in {\mathbb{R}^D}$ which form the rows of data matrix ${\bf{X}} \in {\mathbb{R}^{n \times D}}$. ${\bf{W}} \in {\mathbb{R}^{D \times d}}$ is obtained by the first $d$ eigenvectors of the data covariance matrix ${{\bf{X}}^\top}{\bf{X}}$. ${\bf{Y}} = {\bf{XW}}$ and ${\bf{y}}_i$ is the $i$th row vector of $\bf{Y}$. A binary code corresponding to ${\bf{x}}_i$ is defined by ${{\bf{b}}_i} = {\left\{ { - 1, + 1} \right\}^c}$, where $c$ is the length of the code and the code matrix ${\bf{B}} = {\left[ {{\bf{b}}_1^\top,...,{\bf{b}}_c^\top} \right]^\top}$.

\subsection{Global Positioning/Coding System}
A satellite in a GPS has the ability to measure the distance between itself and a signal receiver on Earth surface. This results in a circle on which every point has the same distance to this satellite as the receiver. Hence, at least three satellites are needed to determine the true position which is the unique intersection of three such circles. More generally, a $d$-dimensional point can be determined by its Euclidean distances to $d+1$ other points in this space~\cite{GPS1}.\medskip\\
\indent In our GHS, each satellite only has 1-bit to record the Euclidean distances. That is, the receivers far from a satellite are denoted as 1 while the nearby ones are denoted as -1. Hence, our hashing function can be defined as:
\begin{equation}\label{eq:hf}
h\left( {{{\bf{y}}_i} - {{\bf{s}}_j}} \right) = \left\{ \begin{array}{l}
 - 1, \qquad {\left\| {{{\bf{y}}_i} - {{\bf{s}}_j}} \right\| \le f\left( {{{\left\| {{\bf{Y}} - {{\bf{1}}^{n \times 1}}{{\bf{s}}_j}} \right\|}_c}} \right)}\\
+1, \qquad {\left\| {{{\bf{y}}_i} - {{\bf{s}}_j}} \right\| > f\left( {{{\left\| {{\bf{Y}} - {{\bf{1}}^{n \times 1}}{{\bf{s}}_j}} \right\|}_c}} \right)}
\end{array} \right. ,
\end{equation}
where ${\left\| {\bf{A}} \right\|_c}$ computes the Frobenius norm of each row of $\bf{A}$ and $f$ can be any proper functions that return a positive real number. Here $median(·)$ is adopted to generate a balanced code matrix. ${\bf{s}}_j$ is the coordinate of the $j$th satellite and it forms up the $j$th row of satellite matrix $\bf{S}$.
\subsection{Data-dependent method (GHS-DD)}
Formally, our hashing model can be described as:
\begin{equation}\label{eq:model}
\begin{array}{l}
\mathop {\arg \min }\limits_{\left\{ {{{\bf{s}}_j}} \right\}} \sum\limits_{i = 1}^{n - 1} {\sum\limits_{i' = i + 1}^n {{e^{ - {{\left\| {{{\bf{y}}_i} - {{\bf{y}}_{i'}}} \right\|}^2}}}} } \\
 \qquad\qquad\qquad\left( {\sum\limits_{p = 1}^c {\left\| {h\left( {{{\bf{y}}_i} - {{\bf{s}}_i}} \right) - h\left( {{{\bf{y}}_{i'}} - {{\bf{s}}_j}} \right)} \right\|} } \right).
\end{array}
\end{equation}
Randomly setting ${\bf{s}}_j$ does not produce satisfactory results. Furthermore, Eq.~\eqref{eq:model} requires pairwise distance between each pair of data points, which leads heavy burden in storage and computation. Inspired by ITQ, we circumvent it by minimizing the quantization loss.\\
At first, let us consider following quantization loss:
\begin{equation}\label{eq:sim1}
\mathop {\arg \min }\limits_{{B_{ij}} \in \left\{ { - 1,1} \right\},{{\bf{s}}_j}} \sum\limits_{i = 1}^n {\sum\limits_{j = 1}^c {{{\left( {\frac{{{B_{ij}} + 1}}{2} - \left\| {{{\bf{y}}_i} - {{\bf{s}}_j}} \right\|} \right)}^2}} }.
\end{equation}
Because $\left\| {{{\bf{y}}_i} - {{\bf{s}}_j}} \right\|$ is always non-negative, we scale and shift B to $[0,1]$. The underlying reasonability of Eq.~\eqref{eq:sim1} is similar to ITQ. To uniquely position a data point in $d$-dimensional space, at least $d+1$ satellites are required and the locations of these satellites should satisfy following condition~\cite{GPS1}:
\begin{equation}\label{eq:cond}
{\rm{rank}}\left( {\left[ {\begin{array}{*{20}{c}}
\Gamma \quad \theta
\end{array}} \right]} \right){\rm{ = }}d,
\end{equation}
where $\Gamma  = \left[ {{{\bf{s}}_2};...;{{\bf{s}}_{d+1}}} \right]$ and $\theta  = \left[ {{{\bf{s}}_2} - {{\bf{s}}_1};...;{{\bf{s}}_{d+1}} - {{\bf{s}}_1}} \right]$. Eq.~\eqref{eq:cond} is called the existence and uniqueness condition for GPS solution~\cite{GPS1}. It can be satisfied by initializing an orthogonal $\Gamma$. Hence, we create $g$ groups of satellites. Within each group, there are $d+1$ satellites, $d$ of which are orthogonal to each other.  We define $\rho : = c/\left( {d + 1} \right)$, a parameter discussed in Section~\ref{sec:parameter}. Note that no more than $d$ mutual orthogonal vectors in a $d$-dimensional space. Each group is rotated by an orthogonal matrix ${\bf{R}}_k$ to find the best location, which gives the following model:
\begin{equation}\label{eq:GHS-DD}
\begin{array}{l}
\mathop {\arg \min }\limits_{\scriptstyle{B_{ij}} \in \left\{ { - 1,1} \right\}\hfill\atop
\scriptstyle{\beta _j},{\alpha _j},{{\bf{R}}_k}\hfill} E = \sum\limits_{i = 1}^n {\sum\limits_{j = 1}^c {\sum\limits_{k = 1}^g {{\delta _k}\left( {{{\bf{s}}_j}} \right){{\left( {{B_{ij}} + {\beta _j} - {\alpha _j}\left\| {{{\bf{y}}_i} - {{\bf{s}}_j}{{\bf{R}}_k}} \right\|} \right)}^2}} } } \;\;\;\;\;\\
 \qquad\qquad\qquad\qquad s.t.{\kern 1pt}  \qquad {\bf{1B}} = {\bf{0}},{\kern 1pt} {\kern 1pt} {{\bf{R}}_k}^\top{{\bf{R}}_k} = {\bf{I}},
\end{array}
\end{equation}
where ${\delta_k}$ is an indicator function. ${\delta _k}\left( {{{\bf{s}}_j}} \right) = 1$, if ${{\bf{s}}_j} \in {\rm{Group}}{\kern 1pt} {\kern 1pt} {\kern 1pt} k$ and ${\delta _k}\left( {{{\bf{s}}_j}} \right) = 0$, if ${{\bf{s}}_j} \notin {\rm{Group}}{\kern 1pt} {\kern 1pt} {\kern 1pt} k$. $\alpha_j$ and $\beta_j$ are used to transform the values of D2S into a proper interval. Eq.~\eqref{eq:GHS-DD} is minimized by iterative minimization.\medskip\\
\textbf{Initialization}. In each group, $\Gamma$ is initialized by the left singular vectors of a $d\times d$ random matrix, so does ${\bf{R}}_k$. Another random $1\times d$ vector is added into each group.\medskip\\
\textbf{Update} $B_{ij}$. The $j$th column of $\bf{B}$ is calculated by Eq.~\eqref{eq:hf}.\medskip\\
\textbf{Update} $\alpha_j$. Take the partial derivative with respect to $\alpha_j$, resulting
\begin{equation}
{\alpha _j} = \frac{{\sum\limits_{i = 1}^n {\sum\limits_{k = 1}^g {{\delta _k}\left( {{{\bf{s}}_j}} \right)\left( {{B_{ij}} + {\beta _j}} \right)\left\| {{{\bf{y}}_i} - {{\bf{s}}_j}{{\bf{R}}_k}} \right\|} } }}{{\sum\limits_{i = 1}^n {\sum\limits_{k = 1}^g {{\delta _k}\left( {{{\bf{s}}_j}} \right){{\left\| {{{\bf{y}}_i} - {{\bf{s}}_j}{{\bf{R}}_k}} \right\|}^2}} } }}.
\end{equation}
\textbf{Update} $\beta_j$. Similar to $\alpha_j$,
\begin{equation}\label{eq:updatebeta}
{\beta _j} = \frac{1}{n}\sum\limits_{i = 1}^n {\sum\limits_{k = 1}^g {{\delta _k}\left( {{{\bf{s}}_j}} \right)\left( {{\alpha _j}\left\| {{{\bf{y}}_i} - {{\bf{s}}_j}{{\bf{R}}_k}} \right\| - {B_{ij}}} \right)} }.
\end{equation}
Please note when we deduce Eq.~\eqref{eq:updatebeta}, $\sum\nolimits_{k = 1}^g {{\delta _k}\left( {{{\bf{s}}_j}} \right)}  = 1$ is applied.\\
\textbf{Update} ${\bf{R}}_k$. We divide this step to two sub-problems. First, ${\bf{s}}_j{\bf{R}}_k$ is substituted by ${\bf{s}}'_j$ to form up following minimization problem:
\begin{equation}
\mathop {\arg \min }\limits_{{\bf{s}}'_j} \sum\limits_{i = 1}^n {\sum\limits_{j = 1}^c {{{\left( {{B_{ij}} + {\beta _j} - {\alpha _j}\left\| {{{\bf{y}}_i} - {\bf{s}}'_j} \right\|} \right)}^2}} },
\end{equation}
which is equivalent to
\begin{equation}\label{eq:updateR1}
\mathop {\arg \min }\limits_{{\bf{s}}'_j} \sum\limits_{i = 1}^n {\sum\limits_{j = 1}^c {{{\left( {B'_{ij} - \left\| {{{\bf{y}}_i} - {\bf{s}}'_j} \right\|} \right)}^2}} }.
\end{equation}
where $B'_{ij} = \left( {{B_{ij}} + {\beta _j}} \right)/{\alpha _j}$. If we treat ${\bf{s}}'_j$ as a receiver, ${{\bf{y}}_i}$ as satellites and $B'_{ij}$ as the D2S, the solution of Eq.~\eqref{eq:updateR1} is the standard solution of GPS~\cite{GPS2}.\medskip\\
\indent We construct following two matrices for each ${\bf{s}}'_j$: $\overline {\bf{Y}}  = \left[ {{\bf{Y}},{\bf{B}}'_{ \cdot j}} \right]$ and ${\bf{Z}} = diag\left( {\overline {\bf{Y}} {{\overline {\bf{Y}} }^\top}} \right)$, where ${\bf{B}}'_{ \cdot j}$ represents the $j$th column of ${{\bf{B}}'}$ and $diag(\bf{A})$ returns a row vector which contains the diagonal elements of $\bf{A}$. Let ${\overline {\bf{Y}} ^ + } = {\left( {{{\overline {\bf{Y}} }^\top}\overline {\bf{Y}} } \right)^{ - 1}}{\overline {\bf{Y}} ^\top}$. Then solve following quadratic equation about $\Lambda$:
\begin{equation}\label{eq:quadratic}
\begin{aligned}
{\Lambda ^2}{\left( {{{\overline {\bf{Y}} }^ + }{\bf{1}}} \right)^\top}\left( {{{\overline {\bf{Y}} }^ + }{\bf{1}}} \right) +& 2\Lambda \left( {{{\left( {{{\overline {\bf{Y}} }^ + }{{\bf{Z}}^\top}} \right)}^\top}\left( {{{\overline {\bf{Y}} }^ + }{\bf{1}}} \right) - 1} \right)\\
 +& {\left( {{{\overline {\bf{Y}} }^ + }{{\bf{Z}}^\top}} \right)^\top}\left( {{{\overline {\bf{Y}} }^ + }{{\bf{Z}}^\top}} \right) = 0.
\end{aligned}
\end{equation}
Eq.~\eqref{eq:quadratic} usually have two solutions $\Lambda_1$ and $\Lambda_2$, therefore two possible $\overline {{\bf{s}}'_j}$ can be found by $\overline {{\bf{s}}'_j}  = {\overline {\bf{Y}} ^ + }\left( {{{\bf{Z}}^\top} + \Lambda {\bf{1}}} \right)$, where $\overline {{\bf{s}}'_j}  = \left[ {{\bf{s}}'_j,\tau } \right]$ and $\tau$ which is useless in our model is related to D2S. To automatically choose a suitable $\overline {{\bf{s}}'_j}$  from two solutions, we initialize ${{\bf{s}}_j}$ with $\left\| {{{\bf{s}}_j}} \right\| = {r_s}$, where $r_s$ is a positive real constant. The $\overline {{\bf{s}}'_j}$ whose norm is closer to $r_s$ is chosen for following steps. $r_s$ is also used in our data-independent satellite distribution algorithm and discussed in Section~\ref{sec:parameter} along with parameter $\rho$.\medskip\\
\indent After ${\bf{s}}'_j$s are calculated, ${\bf{R}}_k$ is found by minimizing following problem:
\begin{equation}\label{eq:updateR2}
\mathop {\arg \min }\limits_{{{\bf{R}}_k}} \sum\limits_{j = 1}^c {\delta \left( {{{\bf{s}}_j}} \right)} \left\| {{\bf{s}}'_j - {{\bf{s}}_j}{{\bf{R}}_k}} \right\|.
\end{equation}
Eq.~\eqref{eq:updateR2} can be solved by singular value decomposition (SVD). Given ${\bf{S}}'_k$ and ${{\bf{S}}_k}$ which contain ${\bf{s}}'_j$ and ${\bf{s}}_j$ of Group $k$, respectively, through SVD, we can get ${{\bf{L}}_1}{\bf{VL}}_2^\top = {{{\bf{S}}'_k}^\top}{{\bf{S}}_k}$ and ${{\bf{R}}_k} = {{\bf{L}}_2}{\bf{L}}_1^\top.$\medskip\\
\textbf{Convergence}. When $\left| {{E^{k - 1}} - {E^k}} \right| < \varepsilon$  or maximum iteration is reached, the algorithm is terminated, where $\varepsilon$ is a small positive real constant.\medskip\\
\textbf{Output}. $\bf{S}$ and thresholds, \emph{i.e.}, $g\left( {{{\left\| {{\bf{Y}} - {{\bf{1}}^{n \times 1}}{{\bf{s}}_j}} \right\|}_c}} \right)$ in Eq.~\eqref{eq:hf}.\medskip\\
\textbf{Out-of-Sample Hashing}. A new query is projected by $\bf{W}$ and then its distance to each satellite ${\bf{s}}_j$ is cut off by $g\left( {{{\left\| {{\bf{Y}} - {{\bf{1}}^{n \times 1}}{{\bf{s}}_j}} \right\|}_c}} \right)$.
{\setlength{\parskip}{1\baselineskip}
\begin{figure*}[htb]
\begin{center}
\includegraphics[width=1\linewidth]{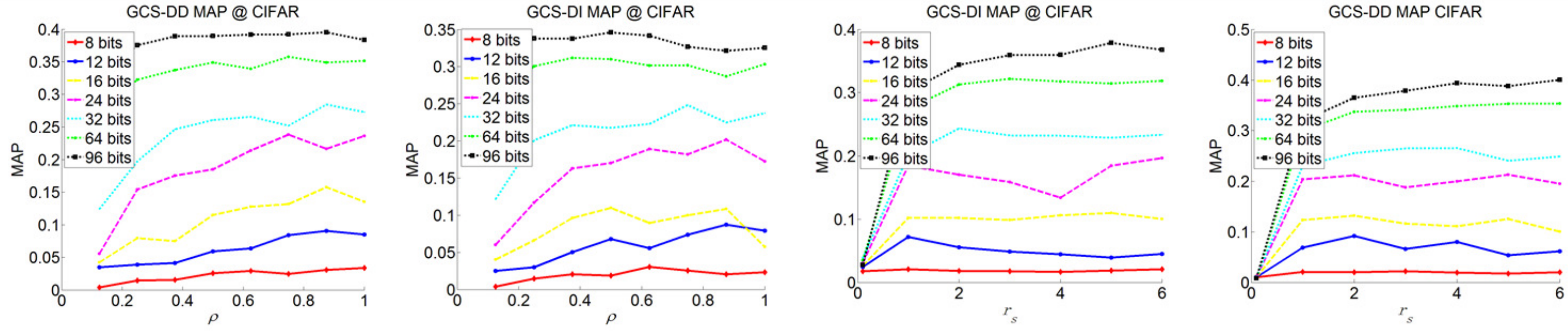}
\end{center}
\caption{MAP on CIFAR-10 dataset for GHS-DI and GHS-DD. When $r_s$ approximates 0, both methods fail to get satisfactory results. The performance of both methods become stable after $r_s$ is larger than 1. On the other hand, GHS-DI gets its best results when $\rho$ is in interval $[0.5,1]$, while it is $[0.7,1]$ for GHS-DD. For $c<16$, the best results appear when $\rho$ approximates 1, because enough amounts of principal components should be selected.}
\label{fig:parameter}
\end{figure*}
}
{
\begin{table*}[htb]
\caption{MAP @ CIFAR-10 for parameter setting $c=d+1$ and $c=d$}
\label{tb:rho}
\begin{center}
\begin{tabular}{|c|c|c|c|c|c|c|c|c|}
\hline
	& &	8	&12	&16	&24	&32	&64	&96\\
\hline
\multirow{3}{*}{GHS-DD}&	$c=d$	&0.1890	&0.2232	 &0.2392	&0.2761	&0.3053	 &0.3816	 &0.4131\\
\cline{2-9}
	                       &$c=d+1$	&0.1884	&0.2214	 &0.2412	&0.2806	&0.3089	 &0.3972	 &0.4324\\
\cline{2-9}
		                    &        &-0.32\% &-0.81\%	&0.83\%	&1.60\%	&1.17\%	 &3.93\%	 &4.46\%\\
\hline
\multirow{3}{*}{GHS-DI}	&$c=d$	&0.1543	&0.1838	 &0.2079	&0.2581	&0.2757	&0.3474	 &0.4018\\
\cline{2-9}
	&$c=d+1$	&0.1537	&0.1861	&0.2098	&0.2688	 &0.3008	&0.3653	&0.4144\\
\cline{2-9}
	&	&-0.39\%	&1.24\%	&0.91\%	&3.98\%	&8.34\%	 &4.90\%	&3.04\%\\
\hline

\end{tabular}
\end{center}
\end{table*}
}
\subsection{Data-independent method (GHS-DI)}
Another condition for good code is uncorrelation [23], \emph{i.e.}, ${{\bf{B}}^\top}{\bf{B}} = n{\bf{I}}$. A direct way to satisfy this condition is distributing the satellites such that only one is close to each receiver; that is, there is no intersection among all $\left({\bf s}_j,r_j\right)$ spheres, where $r_j$ is the minimum radius that include the nearby data points of ${\bf s}_j$.  However, in this situation, each receiver only has 1-bit 1. The hamming distance between any pair of receivers is 0 or 2, which means the distance between two data points in input space is not well preserved. What's more, if we strictly satisfy the balance condition as well as uncorrelation condition in this way, at most 2 satellites can be used.\medskip\\
\indent An alternative way is minimizing the intersections of $\left( {{{\bf{s}}_j},{r_j}} \right)$ sphere and $\left( {{{\bf{s}}_{j'}},{r_{j'}}} \right)$ sphere for any $j\neq {j'}$. That is, we put a tolerance for the values of non-diagonal elements of ${{\bf{B}}^\top}{\bf{B}}$. They are allowed to be non-zero numbers with small absolute values.\medskip\\
\indent The intersection of two $d$-dimensional sphere is too difficult to compute, therefore the pairwise distance between each pair of satellites is maximized. Without constraints, the resulting $\left\| {{{\bf{s}}_j}} \right\|$ may be  $+ \infty$. A reasonable constraint is distributing all satellites on the surface of $\left( {{\bf{0}},{r_s}} \right)$ sphere. As there is no prior knowledge about the data, we assume data points are uniformly distributed in a $\left( {{\bf{0}},r} \right)$ sphere. By $\left\| {{{\bf{s}}_1}} \right\| = ... = \left\| {{{\bf{s}}_c}} \right\| = {r_s}$, the D2S of each satellite will be comparable.\medskip\\
\indent Under the abovementioned assumption, minimizing intersections can be achieved by maximizing the pairwise distance between each pair of satellites:
\begin{equation}\label{eq:GHS-DI}
\mathop {\arg \max }\limits_{\left\{ {{{\bf{s}}_j}} \right\}} E: = \sum\limits_{j = 1}^{c - 1} {\sum\limits_{{j'} = j + 1}^c {{{\left\| {{{\bf{s}}_j} - {{\bf{s}}_{j'}}} \right\|}^2}} } \;\;\;\;s.t. \;\;\;\; {\left\| {{{\bf{s}}_j}} \right\|^2} = r_s^2,\forall j.
\end{equation}
Eq.~\eqref{eq:GHS-DI} can be maximized by Gradient Projection Algorithm (GPA)~\cite{GPA}. The GPA iteratively updates ${\bf{s}}_j$ by moving ${\bf{s}}_j$ along the gradient direction of $E$ and projects ${\bf{s}}_j$ to the boundary defined by the constraint (Algorithm~\ref{alg:1}). The gradient of $E$ with respect to ${\bf{s}}_j$ is
\begin{equation}
\frac{\partial E}{\partial {{\bf{s}}_j}} = \left( {c - j} \right){{\bf{s}}_j} - \sum\limits_{{j'} = j + 1}^c {{{\bf{s}}_{j'}}}.
\end{equation}
{
\begin{algorithm}[b]
\caption{~~Data-Independent Satellite Distribution Algorithm}
\label{alg:1}
\begin{algorithmic}[1]
\Require ${\bf{S}} \in {R^{c \times d}}$
\While {$E$ not converged}
\State ${\bf{s}}_j^{t + 1/2} = {\bf{s}}_j^t + \Delta t\partial E/\partial {\bf{s}}_j^t$
\State ${\bf{s}}_j^{t + 1} = {r_s}{\bf{s}}_j^{t + 1/2}/\left\| {{\bf{s}}_j^{t + 1/2}} \right\|$
\EndWhile
\Ensure $\bf{S}$
\end{algorithmic}
\end{algorithm}}
\indent The projection step can be directly implemented by normalizing each ${\bf{s}}_j$. As the orthogonality of $\bf{B}$ is considered, our GHS-DI method usually produces the second best results on experiments of longer hash bits. Actually the way that GHS-DD satisfies Eq.~\eqref{eq:cond} intrinsically incorporates orthogonality. When ${r_s} \to  + \infty$, the hyper-sphere surface that separates the near and far data points can be treated as a hyper-plane. In this situation, with orthogonal $\left\{ {{{\bf{s}}_j}} \right\}$ and assumption of uniform distribution of data points, this property is easy to understand in $2D$ and $3D$ cases. More generally, we have following theorem.
\begin{The}\label{the}
If (1) data points ${{\bf{y}}_i} \in {\mathbb{R}^d}$ are uniformly distributed in a $({\bf{0}},r)$ sphere, (2) ${{\bf{s}}_j} \bot {{\bf{s}}_{j'}}$ and (3) ${r_s} \to  + \infty$, then ${\bf{h}}_j^\top{{\bf{h}}_{j'}} = 0 (j\neq {j'})$, where ${{\bf{h}}_j}$ and ${{\bf{h}}_{j'}}$ are column vectors whose elements are the binary hash codes generated by Eq.~\eqref{eq:hf}.
\end{The}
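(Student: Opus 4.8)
The plan is to show that, in the stated limit, each hash bit is governed purely by the sign of the projection of a data point onto the satellite direction, and then to exploit the spherical symmetry of the uniform ball to force the two resulting sign patterns to be orthogonal.

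First I would linearize the distance. Writing $\hat{\mathbf{s}}_j = \mathbf{s}_j/\|\mathbf{s}_j\|$ with $\|\mathbf{s}_j\| = r_s$, I would expand $\|\mathbf{y}_i - \mathbf{s}_j\| = r_s\sqrt{1 - 2(\mathbf{y}_i^\top\hat{\mathbf{s}}_j)/r_s + \|\mathbf{y}_i\|^2/r_s^2} = r_s - \mathbf{y}_i^\top\hat{\mathbf{s}}_j + O(1/r_s)$, the expansion being uniform in $i$ because $\|\mathbf{y}_i\| \le r$. Hence, as assumption (3) drives $r_s \to +\infty$, the separating hyper-sphere of Eq.~\eqref{eq:hf} flattens into a hyperplane normal to $\hat{\mathbf{s}}_j$, and the ordering of the distances $\|\mathbf{y}_i - \mathbf{s}_j\|$ coincides with the ordering of $-\mathbf{y}_i^\top\hat{\mathbf{s}}_j$. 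Since the threshold $f$ is the median and the uniform law on a ball centred at the origin is symmetric under $\mathbf{y}\mapsto \mathbf{y} - 2(\mathbf{y}^\top\hat{\mathbf{s}}_j)\hat{\mathbf{s}}_j$, the median projection is $0$; therefore, up to the fixed sign convention of Eq.~\eqref{eq:hf}, $h_{ij} = \operatorname{sign}(\mathbf{y}_i^\top\hat{\mathbf{s}}_j)$ outside a measure-zero tie set.

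Next I would compute the inner product. Because the common sign convention cancels in the product, $h_{ij}h_{ij'} = \operatorname{sign}(\mathbf{y}_i^\top\hat{\mathbf{s}}_j)\operatorname{sign}(\mathbf{y}_i^\top\hat{\mathbf{s}}_{j'})$, so that $\mathbf{h}_j^\top\mathbf{h}_{j'} = \sum_{i=1}^n \operatorname{sign}(\mathbf{y}_i^\top\hat{\mathbf{s}}_j)\operatorname{sign}(\mathbf{y}_i^\top\hat{\mathbf{s}}_{j'})$. Passing to the population version (equivalently the $n\to\infty$ limit in which the empirical median and this average converge to their expectations), it suffices to prove $\mathbb{E}[\operatorname{sign}(u)\operatorname{sign}(v)] = 0$ for $u = \mathbf{y}^\top\hat{\mathbf{s}}_j$ and $v = \mathbf{y}^\top\hat{\mathbf{s}}_{j'}$. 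Here condition (2) is essential: the reflection $R\mathbf{y} = \mathbf{y} - 2(\mathbf{y}^\top\hat{\mathbf{s}}_j)\hat{\mathbf{s}}_j$ sends $u \mapsto -u$ while, using $\hat{\mathbf{s}}_j^\top\hat{\mathbf{s}}_{j'} = 0$, it leaves $v$ fixed, and it preserves the uniform measure on the ball. Thus the joint density satisfies $p(u,v) = p(-u,v)$, so conditionally on $v$ the variable $u$ is symmetric about $0$ and $\mathbb{E}[\operatorname{sign}(u)\mid v] = 0$; the tower property then yields $\mathbb{E}[\operatorname{sign}(u)\operatorname{sign}(v)] = 0$, i.e. $\mathbf{h}_j^\top\mathbf{h}_{j'} = 0$.

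I expect the main obstacle to be the passage from the exact equality asserted in the theorem to what genuinely holds for a finite sample: for finite $n$ the empirical median is not exactly $0$ and the sum fluctuates at order $\sqrt{n}$, so the clean identity $\mathbf{h}_j^\top\mathbf{h}_{j'}=0$ is really a population (expected-value) statement, which I would make explicit. A second, more technical point is controlling the $O(1/r_s)$ remainder: I must argue that for all sufficiently large $r_s$ this remainder cannot reorder two projections separated by a nonzero margin, so that outside a measure-zero tie set the limiting assignment is exactly $\operatorname{sign}(\mathbf{y}_i^\top\hat{\mathbf{s}}_j)$. The symmetry step itself is short and clean; essentially all of the care lives in these limiting and finite-sample qualifications.
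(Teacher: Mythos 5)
Your proof is correct, and it rests on the same geometric fact as the paper's own proof: as $r_s \to +\infty$ the separating spheres flatten into hyperplanes through the origin normal to $\mathbf{s}_j$ and $\mathbf{s}_{j'}$, and the orthogonality of the satellites together with the symmetry of the uniform ball makes the four sign-pattern regions carry equal mass. The difference is in how that symmetry step is formalized. The paper argues synthetically: it places $\mathbf{s}_j = r_s(1,0,\ldots,0)$ and $\mathbf{s}_{j'} = r_s(0,1,\ldots,0)$, shows that points equidistant from a satellite collapse onto a plane $\mathcal{P}$ (via $\cos\theta \to 1$), invokes the balance of the code to force $\mathcal{P}$ through the origin, and then counts: the two perpendicular planes cut the ball into four parts of equal volume, one per sign pattern $(\pm 1, \pm 1)$, so equal numbers of points in the four parts give $\mathbf{h}_j^\top\mathbf{h}_{j'} = 0$. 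You instead linearize the distance explicitly, $\|\mathbf{y}_i - \mathbf{s}_j\| = r_s - \mathbf{y}_i^\top\hat{\mathbf{s}}_j + O(1/r_s)$ uniformly in $i$, reduce each bit to $\operatorname{sign}(\mathbf{y}_i^\top\hat{\mathbf{s}}_j)$, and prove $\mathbb{E}[\operatorname{sign}(u)\operatorname{sign}(v)] = 0$ by a reflection that negates $u$, fixes $v$, and preserves the uniform law; this is the measure-theoretic counterpart of the paper's quadrant count. What your version buys is rigor at precisely the two points the paper glosses over: uniform control of the $O(1/r_s)$ remainder (the paper never checks that the limit commutes with the thresholding), and the explicit recognition that for a finite sample the identity $\mathbf{h}_j^\top\mathbf{h}_{j'} = 0$ holds only in expectation or in the $n \to \infty$ limit --- the paper's assertion that ``there are equal number of data points in these four parts'' silently assumes exactly this population-level idealization. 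In short, yours is the same argument carried out honestly, and your closing caveats identify the real (unacknowledged) gaps in the published proof rather than gaps in your own.
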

\begin{proof}
Since the data points are uniformly distributed in a $\left({\bf 0},r\right)$ sphere, without losing generality, let us set ${{\bf{s}}_j} = {r_s}{\left( {1,0,0,...,0} \right)^d}$ and ${{\bf{s}}_{j'}} = {r_s}{\left( {0,1,0,...,0} \right)^d}$. In Eq.~(1), if $\left\| {{{\bf{y}}_i} - {{\bf{s}}_j}} \right\| > {r_s}$, the $i$th element of ${\bf h}_j$ will be set to $1$, otherwise it will be set to $-1$. For any two points ${{\bf{y}}_i}$ and ${{\bf{y}}_j}$ that satisfy $\left\| {{{\bf{y}}_i} - {{\bf{s}}_j}} \right\| = \left\| {{{\bf{y}}_j} - {{\bf{s}}_j}} \right\| = {r_s}$, we have $\left( {{{\bf{y}}_i} - {{\bf{s}}_j}} \right){\left( {{{\bf{y}}_j} - {{\bf{s}}_j}} \right)^\top}/r_s^2 = 1$, when ${r_s} \to  + \infty $. That is, $\cos \theta  \to 1$ which implies $\theta  \to 0$, where $\theta$ is the angle between two unit vectors along ${{\bf{y}}_i} - {{\bf{s}}_j}$ and ${{\bf{y}}_j} - {{\bf{s}}_j}$, respectively. Hence, ${{\bf{y}}_i}$ and ${{\bf{y}}_j}$ locate on a plane $\mathcal{P}$ whose distance to ${\bf s}_j$ is $r_s$.\\
To generate a balanced ${\bf h}_j$, $\mathcal{P}$ should cross the origin and perpendicular to ${\bf s}_j$. Since ${{\bf{s}}_j} \bot {{\bf{s}}_{j'}}$, $\mathcal{P}$ is also perpendicular to $\mathcal{Q}$ which corresponds to ${\bf s}_{j'}$. It is evident that $\mathcal{P}$ and $\mathcal{Q}$ separate the $\left({\bf 0},r\right)$ sphere into four parts with equal volume:
\begin{equation}
\left\{ \begin{array}{l}
\left\{ {{{\bf{y}}_i}|\left\| {{{\bf{y}}_i} - {{\bf{s}}_j}} \right\| > {r_s}} \right\} \cap \left\{ {{{\bf{y}}_i}|\left\| {{{\bf{y}}_i} - {{\bf{s}}_{j'}}} \right\| > {r_s}} \right\}\\ \qquad \qquad \qquad {{\bf{h}}_j}\left( i \right) = 1,{{\bf{h}}_{j'}}\left( i \right) = 1\\
\left\{ {{{\bf{y}}_i}|\left\| {{{\bf{y}}_i} - {{\bf{s}}_j}} \right\| > {r_s}} \right\} \cap \left\{ {{{\bf{y}}_i}|\left\| {{{\bf{y}}_i} - {{\bf{s}}_{j'}}} \right\| < {r_s}} \right\}\\ \qquad \qquad \qquad {{\bf{h}}_j}\left( i \right) = 1,{{\bf{h}}_{j'}}\left( i \right) =  - 1\\
\left\{ {{{\bf{y}}_i}|\left\| {{{\bf{y}}_i} - {{\bf{s}}_j}} \right\| < {r_s}} \right\} \cap \left\{ {{{\bf{y}}_i}|\left\| {{{\bf{y}}_i} - {{\bf{s}}_{j'}}} \right\| > {r_s}} \right\}\\ \qquad \qquad \qquad {{\bf{h}}_j}\left( i \right) =  - 1,{{\bf{h}}_{j'}}\left( i \right) = 1\\
\left\{ {{{\bf{y}}_i}|\left\| {{{\bf{y}}_i} - {{\bf{s}}_j}} \right\| < {r_s}} \right\} \cap \left\{ {{{\bf{y}}_i}|\left\| {{{\bf{y}}_i} - {{\bf{s}}_{j'}}} \right\| < {r_s}} \right\}\\ \qquad \qquad \qquad {{\bf{h}}_j}\left( i \right) =  - 1,{{\bf{h}}_{j'}}\left( i \right) =  - 1
\end{array} \right. .
\end{equation}
Since there are equal number of data points in these four parts, it is easy to verify that ${\bf{h}}_j^\top{{\bf{h}}_{j'}} = 0$.
\end{proof}
\indent In \textbf{Theorem}~\ref{the}, condition (1) and (2) are impractical and therefore only the second sufficient condition can be satisfied by setting $c=d$; however, this contravenes the perspective of SSCT and the existence and uniqueness condition for GPS solution. In Section~\ref{sec:parameter}, we will show $c=d$ usually cannot generate the best results. Although our methods cannot exactly fulfill these three conditions, its superiority of considering the orthogonality was proven by its high F-measure in experiments on longer bits (Section~\ref{sec:exp}).

\subsection{Parameters $r_s$ and $\rho$}
\label{sec:parameter}
\indent There are two key parameters in our methods - $r_s$ and $\rho$. $r_s$ should not be too small. Consider an extreme example that $r_s=0$, then all bits of the points close to the origin will equal to 0 and bits of other points will equal to 1. Obviously, such codes are inefficient.\\
\indent $\rho$ should be moderate. If $\rho$ is too large, the binary codes will gradually lose their ability to encode the values of projections which are real numbers. On the other hand, when $\rho$ becomes small, fewer projections can be used, so the data points reconstructed by these projections cannot approximate the original ones accurately enough.\medskip\\
\indent The mean average precision (MAP) on CIFAR-10 dataset ~\cite{CIFAR-10} with varying $r_s$ and $\rho$ is shown in Fig.~\ref{fig:parameter}. CIFAR-10 comprises of 60K images from the 80 Million Tiny Image dataset~\cite{Tiny} and we use 1024-dimensional GIST descriptor to represent each image. Their PCA projections are normalized by the largest Euclidean norm of all projected data. When testing on different $\rho$s , at most one group containing less than $d+1$ satellites may exist. Based on the results in Fig.~\ref{fig:parameter}, we empirically set $r_s$ as 2 for all experiments and set $\rho$ as 1 for experiments whose $c \leq 16$ , while 0.5 for others.\medskip\\
\indent We also tested our two methods by setting $c=d$ (Table~\ref{tb:rho}). The percentages shown in Table~\ref{tb:rho} denote the improvement by setting $c=d+1$. Referring to Table~\ref{tb:rho}, we observe that for $c>16$, both methods perform $1\%-8\%$ better with $c=d+1$, suggesting that the existence and uniqueness condition for GPS solution is important. For experiment on $c\leq 16$, the situation is opposite, because the number of PCA projections are too small and its effect dominates results. However, the differences are slight in these cases (less than 1\%), so we did not use parameter setting $c=d$ in experiments of Section 4.

\section{Relations to Existing Methods}
During past several years, many state-of-the-art data-dependent hashing methods have been proposed. These methods derive from various motivations. In this section, only those related to our proposed methods are briefly reviewed.
\subsection{Iterative Quantization (ITQ)}
Gong~\emph{et al.}~\cite{ITQ} formulated ITQ as a minimization problem:
\begin{equation}\label{eq:ITQ}
\mathop {\arg \min }\limits_{{\bf{B}},{\bf{R}}} \left\| {{\bf{B}} - {\bf{XWR}}} \right\|_F^2.
\end{equation}
Eq.~\eqref{eq:ITQ} is minimized by iteratively updating $\mathbf{B}$ and $\mathbf{R}$. $\mathbf{R}$ is required to be orthogonal, which can be considered as a rotation to $\mathbf{W}$. IsoH~\cite{IsoH} is directly derived from ITQ by finding a projection with equal variances for different dimensions. HH~\cite{HH} rotates $\bf{W}$; however, unlike ITQ, it uses an auxiliary variable for the code matrix during the iterative optimization and puts an orthogonal constraint on it. Then, the auxiliary variable is thresholded to generate code matrix. ok-means~\cite{okmeans} rotates and scales $\bf{B}$ to minimize the quantization loss. Our method rotates ${\bf S}$ and scales the D2S. ITQ, IsoH and HH use principle components whose number is exactly equal to the bit length of hash codes. That is, they cannot be used to produce hash code that is longer than the data dimension. Theoretically, our methods can produce arbitrary length of hash codes.\medskip\\
\subsection{Inductive Hashing on Manifolds (IMH)}
IMH~\cite{IMH} first generates the Base matrix $\bf{C}$ by K-means clustering. Each column $\bf{C}$ corresponds to a cluster center. Then it embeds $\bf{B}$ into low-dimensional space by manifold learning methods~\cite{manifold1}\cite{manifold2}. The embedding methods affect the performance of IMH. Throughout this paper, t-SNE~\cite{manifold1} is used because it achieved the best results in the authors' experiments~\cite{IMH}. Finally, the embedding for the training data is calculated by
\begin{equation}
{\bf{Y}} = {\overline {\bf{W}} _{{\bf{XB}}}}{{\bf{Y}}_{\bf{B}}},
\end{equation}
where the elements ${\overline {\bf{W}} _{ij}}$ in ${\overline {\bf{W}} _{{\bf{XB}}}}$ is defined as
\begin{equation}\label{eq:IMH}
{\overline {\bf{W}} _{ij}} = \frac{{\exp \left( { - {{\left\| {{{\bf{x}}_i} - {{\bf{c}}_j}} \right\|}^2}/{\sigma ^2}} \right)}}{{\sum\limits_{i = 1}^m {\exp \left( { - {{\left\| {{{\bf{x}}_i} - {{\bf{c}}_j}} \right\|}^2}/{\sigma ^2}} \right)} }}.
\end{equation}
where $\bf{c}_j$ is the $j$th column of $\bf{C}$. Eq.~\eqref{eq:IMH} is quite similar to membership in fuzzy c-means clustering~\cite{FCM}. The embedding for the training data is linear combination of embedding for $\bf{C}$. In our method, each satellite encodes 1-bit according to the distances from itself to the data points and we don't encode the satellites.\medskip\\
\subsection{Spectral Hashing (SH)}
Weiss~\emph{et al.}~\cite{SH} formulated the SH as:
\begin{equation}\label{eq:ITQ}
\begin{array}{l}
\mathop {\arg \min }\limits_{\bf{Y}} \sum\limits_{{{\bf{x}}_i},{{\bf{x}}_j} \in {\bf{X}}} {{e^{ - {{\left\| {{{\bf{x}}_i} - {{\bf{x}}_j}} \right\|}^2}/{\sigma ^2}}}{{\left\| {{{\bf{b}}_i} - {{\bf{b}}_j}} \right\|}^2}} \\
s.t.\quad{\bf{B}} \in {\left\{ { - 1,1} \right\}^{n \times c}},\quad{{\bf{B}}^\top}{\bf{B}} = n{\bf{I}},\quad{{\bf{B}}^\top}{\bf{1}} = 0.
\end{array}
\end{equation}
Eq.~\eqref{eq:model} is similar to Eq.~\eqref{eq:ITQ}. The graph affinity matrix ${\bf{W}}$ with ${{\bf{W}}_{ij}} = \exp \left( { - {{\left\| {{{\bf{x}}_i} - {{\bf{x}}_j}} \right\|}^2}/{\sigma ^2}} \right)$ is intractable for large datasets. SH evaluates $c$ smallest eigenvalues for each PCA direction to create a list of $cD$ eigenvalues, sorts this list to find the $c$ smallest eigenvalues and then thresholds the corresponding eigenfunctions. The eigenvalue list creation step is consistent with the perspective of SSCT, however it is somewhat heuristic~\cite{ITQ}. AGH and DGH compute D2S to form up a highly sparse affinity matrix to minimize the modified object function of SH. GHS-DD avoids the computation and storage of pairwise distances of all data points by minimizing the quantization loss. Furthermore, our method generates a balanced code matrix but they cannot.\medskip\\
\subsection{Spherical Hashing (SpH)}
The final step of SpH~\cite{SpH} is the same as our method, so SpH also generates a balanced code matrix. However, SpH searches the locations of special points in the entire space, which makes it difficult to find a good solution. The authors claimed that the distances between these points should be neither too large nor too small, and hence an empirical point-finding procedure was devised that has less theoretical support. With more concrete theoretical analysis, our proposed method appears to outperform SpH.

{
{\setlength{\parskip}{1\baselineskip}
\begin{figure*}[htb]
\begin{center}
\includegraphics[width=0.9\linewidth]{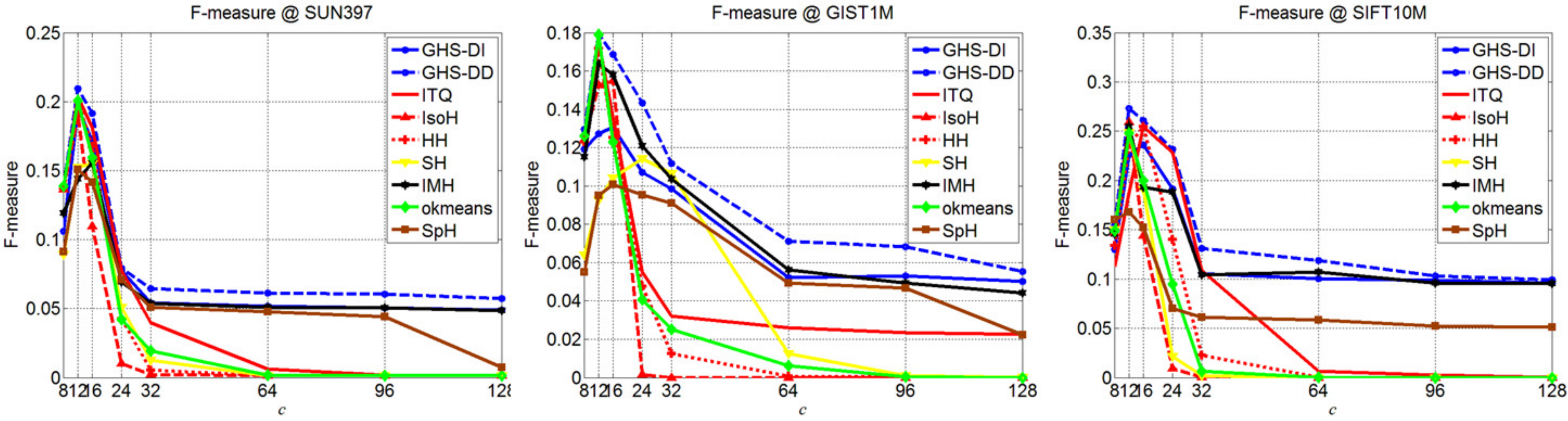}
\end{center}
\caption{Mean F-measure of hash lookup with Hamming radius 2 for different methods on SUN397, GIST1M and SIFT10M. }
\label{fig:F}
\end{figure*}
}
\begin{table*}[htb]
\caption{MAP on SUN397. $c$ denotes the number of hash bits used in hashing methods.}
\label{tb:MAP1}
\begin{center}
\begin{tabular}{|c|c|c|c|c|c|c|c|c|}
\hline
& \multicolumn{8}{c}{\bf SUN397} \vline
\\ \hline
$c$	& 8	&12	&16	&24	&32	&64	&96	&128\\ \hline
\bf GCS-DI	&\bf 0.1336	&\bf 0.1744	&\bf 0.2194	&\bf 0.2290	&\bf 0.2579	&\bf 0.3167	 &\bf 0.3588	&\bf 0.3860\\ \hline
\bf GCS-DD	&\bf 0.1533	&\bf 0.1945	&\bf 0.2447	&\bf 0.2746	&\bf 0.2998	&\bf 0.3492	 &\bf 0.3880	&\bf 0.4096\\ \hline
ITQ	&0.1508	&0.1859	&0.2301	&0.2619	&0.2886	&0.3317	&0.3592	&0.3750\\ \hline
IsoH	&0.1420	&0.1677	&0.1881	&0.1950	&0.2278	&0.2578	&0.2873	&0.2882\\ \hline
HH	&0.1478	&0.1866	&0.2213	&0.2554	&0.2687	&0.3253	&0.3543	&0.3739\\ \hline
SH	&0.1219	&0.1369	&0.1475	&0.1705	&0.1758	&0.1897	&0.2180	&0.2206\\ \hline
IMH	&0.1296	&0.1357	&0.1533	&0.2453	&0.2689	&0.2896	&0.3077	&0.3990\\ \hline
okmeans	&0.1469	&0.1852	&0.2136	&0.2524	&0.2716	&0.3248	&0.3507	&0.3658\\ \hline
SpH	&0.0377	&0.0359	&0.0364	&0.0365	&0.0363	&0.0599	&0.0942	&0.2578\\ \hline
\end{tabular}
\end{center}
\end{table*}
\begin{table*}[htb]
\caption{MAP on GIST1M. $c$ denotes the number of hash bits used in hashing methods.}
\label{tb:MAP2}
\begin{center}
\begin{tabular}{|c|c|c|c|c|c|c|c|c|}
\hline
& \multicolumn{8}{c}{\bf GIST1M} \vline
\\ \hline
$c$	&8	&12	&16	&24	&32	&64	&96	&128\\ \hline
\bf GCS-DI	&\bf 0.1245	&\bf 0.1552	&\bf 0.1802	&\bf 0.2052	&\bf 0.2191	&\bf 0.2596	 &\bf 0.2790	&\bf 0.2885\\ \hline
\bf GCS-DD	&\bf 0.1358	&\bf 0.1682	&\bf 0.1952	&\bf 0.2211	&\bf 0.2438	&\bf 0.2694	 &\bf 0.2854	&\bf 0.2967\\ \hline
ITQ	&0.1260	&0.1593	&0.1851	&0.2098	&0.2269	&0.2577	&0.2703	&0.2775\\ \hline
IsoH	&0.1121	&0.1310	&0.1844	&0.1939	&0.2288	&0.2579	&0.2712	&0.2854\\ \hline
HH	&0.1207	&0.1603	&0.1780	&0.2019	&0.2247	&0.2597	&0.2745	&0.2880\\ \hline
SH	&0.0871	&0.0986	&0.1033	&0.1208	&0.1339	&0.1682	&0.1781	&0.1781\\ \hline
IMH	&0.1248	&0.1449	&0.1748	&0.1849	&0.1965	&0.2161	&0.2385	&0.2638\\ \hline
okmeans	&0.1239	&0.1610	&0.1778	&0.2070	&0.2201	&0.2565	&0.2741	&0.2809\\ \hline
SpH	&0.0369	&0.0349	&0.0348	&0.0359	&0.0356	&0.0637	&0.0788	&0.1919\\ \hline
\end{tabular}
\end{center}
\end{table*}

\begin{table*}[htb]
\caption{MAP on SIFT10M. $c$ denotes the number of hash bits used in hashing methods.}
\label{tb:MAP3}
\begin{center}
\begin{tabular}{|c|c|c|c|c|c|c|c|c|}
\hline
& \multicolumn{8}{c}{\bf SIFT10M} \vline
\\ \hline
$c$	&8	&12	&16	&24	&32	&64	&96	&128\\ \hline
\bf GCS-DI	&\bf 0.1738	&\bf 0.2193	&\bf 0.2674	&\bf 0.3342	&\bf 0.3837	&\bf 0.5156	 &\bf 0.5569	&\bf 0.5797\\ \hline
\bf GCS-DD	&\bf 0.1864	&\bf 0.2339	&\bf 0.2769	&\bf 0.3535	&\bf 0.4098	&\bf 0.5277	 &\bf 0.5692	&\bf 0.5889\\ \hline
ITQ	&0.1666	&0.2195	&0.2655	&0.3452	&0.3906	&0.5025	&0.5522	&0.5782\\ \hline
IsoH	&0.1764	&0.2224	&0.2469	&0.3326	&0.3766	&0.4653	&0.5524	&0.5695\\ \hline
HH	&0.1701	&0.2258	&0.2516	&0.3143	&0.3524	&0.4494	&0.5163	&0.5554\\ \hline
SH	&0.1704	&0.2170	&0.2382	&0.2708	&0.2810	&0.3148	&0.3039	&0.3157\\ \hline
IMH	&0.1833	&0.1888	&0.2007	&0.2254	&0.2884	&0.3052	&0.3358	&0.3634\\ \hline
okmeans	&0.1814	&0.2260	&0.2699	&0.3233	&0.3605	&0.4401	&0.4538	&0.4964\\ \hline
SpH	&0.0440	&0.0487	&0.0400	&0.0475	&0.0381	&0.0615	&0.1721	&0.1947\\ \hline
\end{tabular}
\end{center}
\end{table*}
{
\begin{table*}[htb]
\caption{Training and testing time in seconds}
\label{tb:time}
\begin{center}
\begin{tabular}{|c|c|c|c|c|c|c|c|}
\hline
&\multicolumn{2}{c}{\bf SUN397}\vline&\multicolumn{2}{c}{\bf GIST1M}\vline&\multicolumn{2}{c}{\bf SIFT10M}\vline
\\ \hline
&Train&Test&Train&Test&Train&Test
\\ \hline
{\bf GHS-DI}&${\bf 9.9}$&${\bf 2.7\times 10^{-4}}$&${\bf 130.4}$&${\bf 3.5\times 10^{-4}}$&${\bf 166.1}$&${\bf 1.4\times 10^{-4}}$
\\ \hline
{\bf GHS-DD} &${\bf 24.3}$ &${\bf 3.2\times 10^{-4}}$ &${\bf 212.3}$ &${\bf 3.5\times 10^{-4}}$ &${\bf 1005.1}$ &${\bf 1.4\times 10^{-4}}$
\\ \hline
ITQ & $14.8$ &$3.1\times 10^{-5}$ &$142.7$ &$4.5\times 10^{-5}$ &$322.0$ &$1.3\times 10^{-5}$
\\ \hline
IsoH & $9.6$ &$3.2\times 10^{-5}$ &$136.5$ &$6.1\times 10^{-5}$ &$185.6$ &$2.0\times 10^{-5}$
\\ \hline
HH & $26.8$ &$2.1\times 10^{-5}$& $214.9$ &$3.9\times 10^{-5}$ &$1307.1$ &$1.3\times 10^{-5}$
\\ \hline
SH & $9.7$ &$6.5\times 10^{-4}$ &$119.3$ &$9.2\times 10^{-4}$& $202.5$ &$6.2\times 10^{-4}$
\\ \hline
IMH & $97.4$ &$2.3\times 10^{-4}$ &$1024.4$ &$2.8\times 10^{-4}$ &$702.2$ &$2.8\times 10^{-4}$
\\ \hline
okmeans& $14.0$ &$2.3\times 10^{-5}$ &$144.5$ &$5.5\times 10^{-5}$ &$301.2$& $1.2\times 10^{-5}$
\\ \hline
SpH & $28.2$ &$3.3\times 10^{-4}$& $225.8$ &$4.4\times 10^{-4}$ &$190.7$ &$2.7\times 10^{-4}$
\\ \hline
\end{tabular}
\end{center}
\end{table*}
}
\begin{figure*}[t]
\begin{center}
\includegraphics[width=0.8\linewidth]{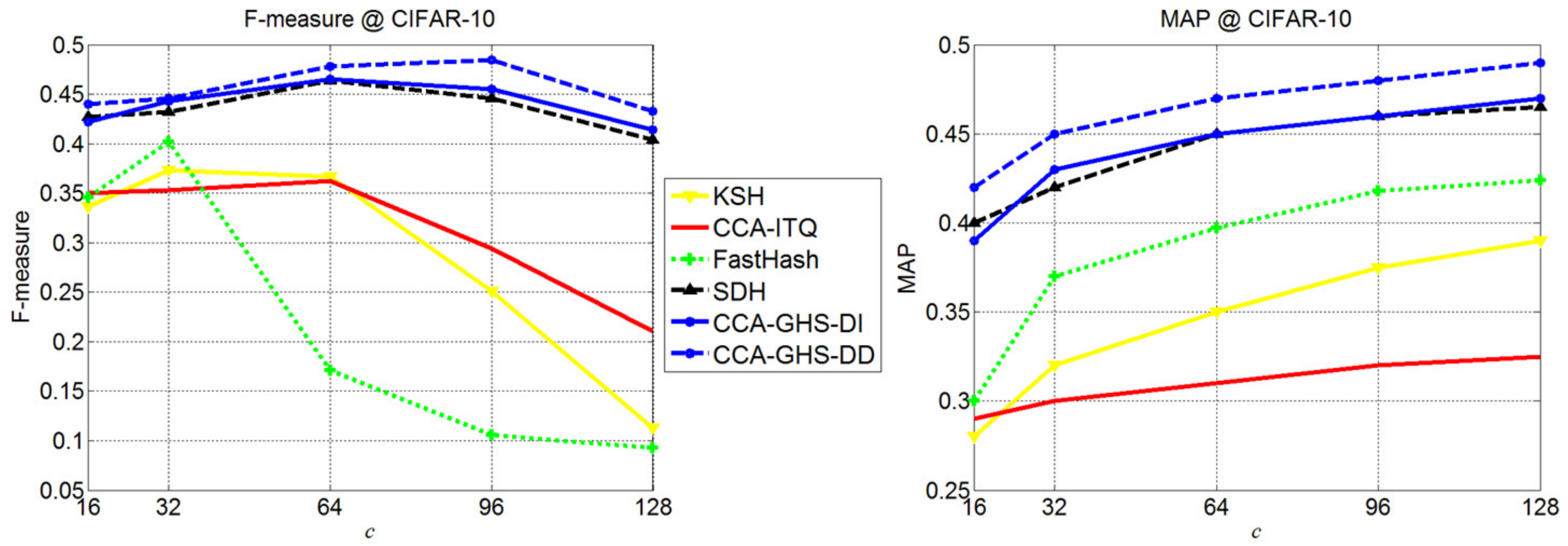}
\end{center}
\caption{Mean F-measure of hash lookup with Hamming radius 2 and MAP for different methods on CIFAR-10.}
\label{fig:CIFARSupervised}
\end{figure*}
\begin{figure*}[htb]
\begin{center}
\includegraphics[width = 0.8\linewidth,trim=10 100 50 200,clip]{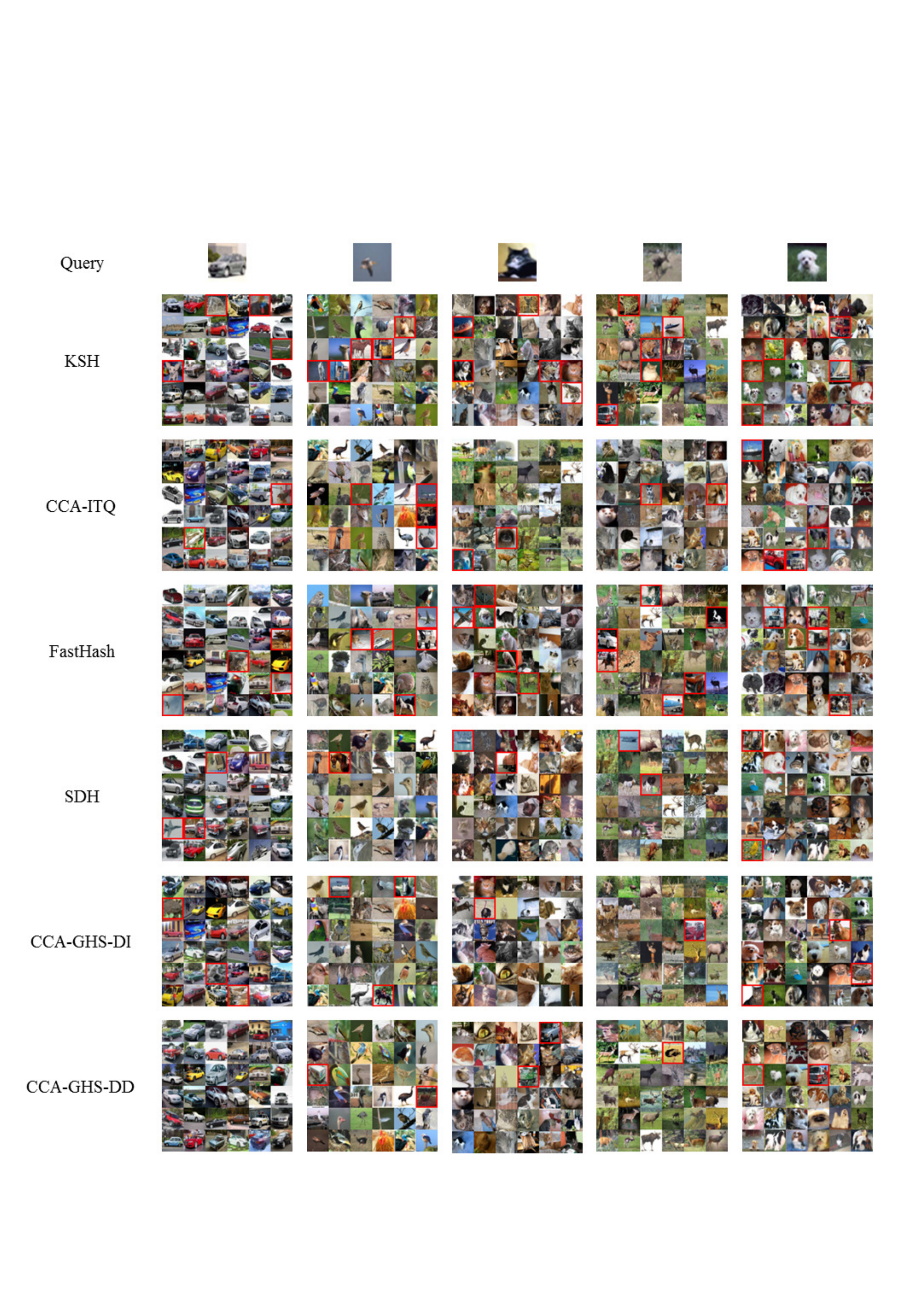}
\caption{The query images and the query results returned by compared methods with 32 hash bits.}
\label{fig:synthesis}
\end{center}
\end{figure*}
\section{Experiments}
\label{sec:exp}
Our experiments were conducted on three datasets of three different scales: SUN397~\cite{SUN397}, GIST1M~\cite{GIST1M} and SIFT10M. SUN397 contains about 108K images and we represent each image by a 512-dimensional GIST descriptor~\cite{GIST}. GIST1M consists of 1 million 960-dimensional GIST descriptors. SIFT10M is a 10 million subset of SIFT1B~\cite{GIST1M} dataset which comprises of 1 billion 128-dimensional SIFT descriptors~\cite{SIFT}. The 10 million data points are randomly chosen. 1K images are randomly selected from the whole SUN397 to form a separate test dataset. For GIST1M, there is a 1K test dataset available. For SIFT10M, we randomly selected 1K data points from its 10K test dataset. Groundtruth neighbors for a given query are defined as the samples in the top of 2\% Euclidean distance.
\subsection{Protocols and Baselines}
We evaluate our methods by comparing to seven hashing methods which includes: Iterative Quantization (ITQ)~\cite{ITQ}, Isotropic Hashing (IsoH)~\cite{IsoH}, Harmonious Hashing (HH)~\cite{HH}, Spectral Hashing (SH)~\cite{SH}, Inductive Manifold Hashing (IMH)~\cite{IMH}, Orthogonal K-means (ok-means)~\cite{okmeans} and Spherical Hashing (SpH)~\cite{SpH}. Our data-dependent and data-independent are denoted as GHS-DD and GHS-DI, respectively. We use publicly available codes of comparing methods and follow the suggesting parameter settings by corresponding publications. All data are zero-centered and in our methods, their PCA projections are normalized by the largest Euclidean norm of all projected data in our methods. Two kinds of experiments - \emph{Hamming ranking} and \emph{hash lookup} were conducted. The performance of \emph{Hamming ranking} is measured by MAP and F1 score which is denoted as F-measure is used for evaluating the performance of \emph{hash lookup}, where F1 score is defined as $2(precision\cdot recall)/(precision+recall)$. Ground truths are defined by Euclidean neighbors.

\subsection{Quantitative Evaluation}
The mean average precision (MAP) values are given in Table~\ref{tb:MAP1}-\ref{tb:MAP3}. It can be seen that GHS-DD outperforms all compared methods. The performance of GHS-DI is poorer than ITQ, HH and SH except of 128-bit experiments. Benefitting from the reasonability on information theory and balanced code matrix, GHS-DD exceeds ITQ, IsoH and HH. Due to the limitation on computation, SpH works on a small subset of the whole dataset and its empirical satellite distribution algorithm is demonstrated to be less efficient than ours. The F-measure is illustrated in Fig.~\ref{fig:F}. Again, GHS-DD exceeds others. It is worth noticing that GHS-DI generated the second best MAP and F-measure in experiments on longer bits ($c>96$), because GHS-DI considers orthogonality of the code matrix. The way that GHS-DD satisfies the condition of uniqueness and existence of GPS solution, \emph{i.e.}, Eq.~\eqref{eq:cond} and its data-dependent property makes it work better than GHS-DI.
\subsection{Computational Efficiency}
Training and testing time on 32-bit are given in Table~\ref{tb:time}. All experiments were done on MATLAB R2013b installed on a PC with 2.85 GHz CPU and 128 GB RAM. The major computation cost of GHS-DI is the calculation of D2S at the final step, which is linearly related to the product of data dimension and size of dataset. Hence, it takes the least time on GIST1M and SIFT10M. Because GHS-DD computes D2S in every iteration, its computation cost is moderate. When testing a new query, GHS-DI and GHS-DD computes D2S and hence their computation costs are approximate. Although the testing procedure of SpH is similar to ours, it computes D2S in original input data space whose dimension is $D$, so its testing time is longer.

\subsection{Incorporating Label Information}
To incorporate label information, a supervised dimensionality reduction
method can be used to better capture the semantic structure of the
dataset. Among various supervised dimensionality reduction methods,
Canonical Correlation Analysis (CCA)~\cite{CCA} has proven to be efficient
for extracting a common latent space from two views~\cite{CCA_guanshui1} and robust
to noise~\cite{CCA_guanshui2}.\\

\indent Let $\mathbf{z_{i}\in}\{0,1\}^{l}$ be a label vector, where $l$
is the total number of labels. If the $i$th image is associated with
the corresponding label, $\mathbf{z_{i}=}1$ and $\mathbf{z_{i}=}0$
otherwise. $\mathbf{Z\in}\{0,1\}^{n\times l}$ is the matrix whose
rows are comprised of label vectors. The goal of CCA is to maximize
the correlation between projected data matrix\textbf{ $\mathbf{Y}$}
and label matrix $\mathbf{Z}$ by finding two projection directions
$\mathbf{w_{k}}$ and $\mathbf{u_{k}}$. The correlation is defined
as:

\begin{equation}
\begin{aligned}C\left(\mathbf{w_{k},u_{k}}\right)=\frac{\mathbf{w_{k}^{\top}X^{\top}Yu_{k}}}{\sqrt{\mathbf{w_{k}^{\top}X^{\top}Xw_{k}u_{k}^{\top}Y^{\top}Yu_{k}}}}\\
s.t.\, w_{k}^{\top}X^{\top}Xw_{k}=1,\, u_{k}^{\top}Y^{\top}Yu_{k}=1.
\end{aligned}
\end{equation}
 $\mathbf{w_{k}}$ can be got by solving the following generalized
eigenvalue problem:

\begin{equation}
\mathbf{X^{\top}Y}\left(\mathbf{Y^{\top}Y}+\rho\mathbf{I}\right)^{-1}\mathbf{Y^{\top}}\mathbf{Xw_{\mathbf{k}}}=\lambda_{k}^{2}\left(\mathbf{X^{\top}X}+\rho\mathbf{I}\right)\mathbf{w}_{\mathbf{k}}\label{eq:wk},
\end{equation}
where $\rho$ is a small regularization constant and is set to be
0.0001 here. Just as in the case of PCA, the leading generalized eigenvectors
$\mathbf{w}_{\mathbf{k}}$ scaled their corresponding eigenvalues $\lambda_k$ form up the rows of
projection matrix $\mathbf{\widehat{W}}\in\mathbb{R}^{D\times d}$
and we obtain the embeded data matrix $\mathbf{Y=X\widehat{W}}$.
Finally, both of our data-independent and data-dependent methods can
be used to generate hashing codes.\\

\indent CIFAR-10 dataset is used in this experiment. The 60K images in CIFAR-10
are labelled as 10 classes with 6,000 samples for each class. Again,
each image is represented by a 1024 dimensional GIST feature. 1,000
samples are randomly chosen as queries and the remaining samples are
used for training. Our proposed supervised hashing methods are denoted as CCA-GHS-DI and CCA-GHS-DD, respectively. The baseline methods are Supervised Discrete Hashing (SDH)~\cite{SDH}, KSH~\cite{CV2}, FastHash~\cite{FastHash} and CCA-ITQ~\cite{ITQ}.\\

\indent The mean F-measure of hash lookup Hamming distance 2 and MAP scores of the compared methods are given in Fig.~\ref{fig:CIFARSupervised}. CCA-GHS-DD achieves the best F-measures and MAPs for all code lengths, while CCA-GHS-DI is only a little inferior to SDH for 16-bit code length. In the hash lookup experiments, we found that setting Hamming distance as 2 is favorable for both of our proposed methods, because two groups of satellites were used for experiments of $c>16$. In Fig.~\ref{fig:synthesis}, 5 queries with their corresponding results retrieved by compared methods using 16-bit hashing code are illustrated to qualitatively evaluate the performance. It can be seen that both CCA-GHS-DI and CCA-GHS-DD outperform the compared methods.

\begin{figure}[t]
\begin{center}
\includegraphics[width = 0.9\linewidth]{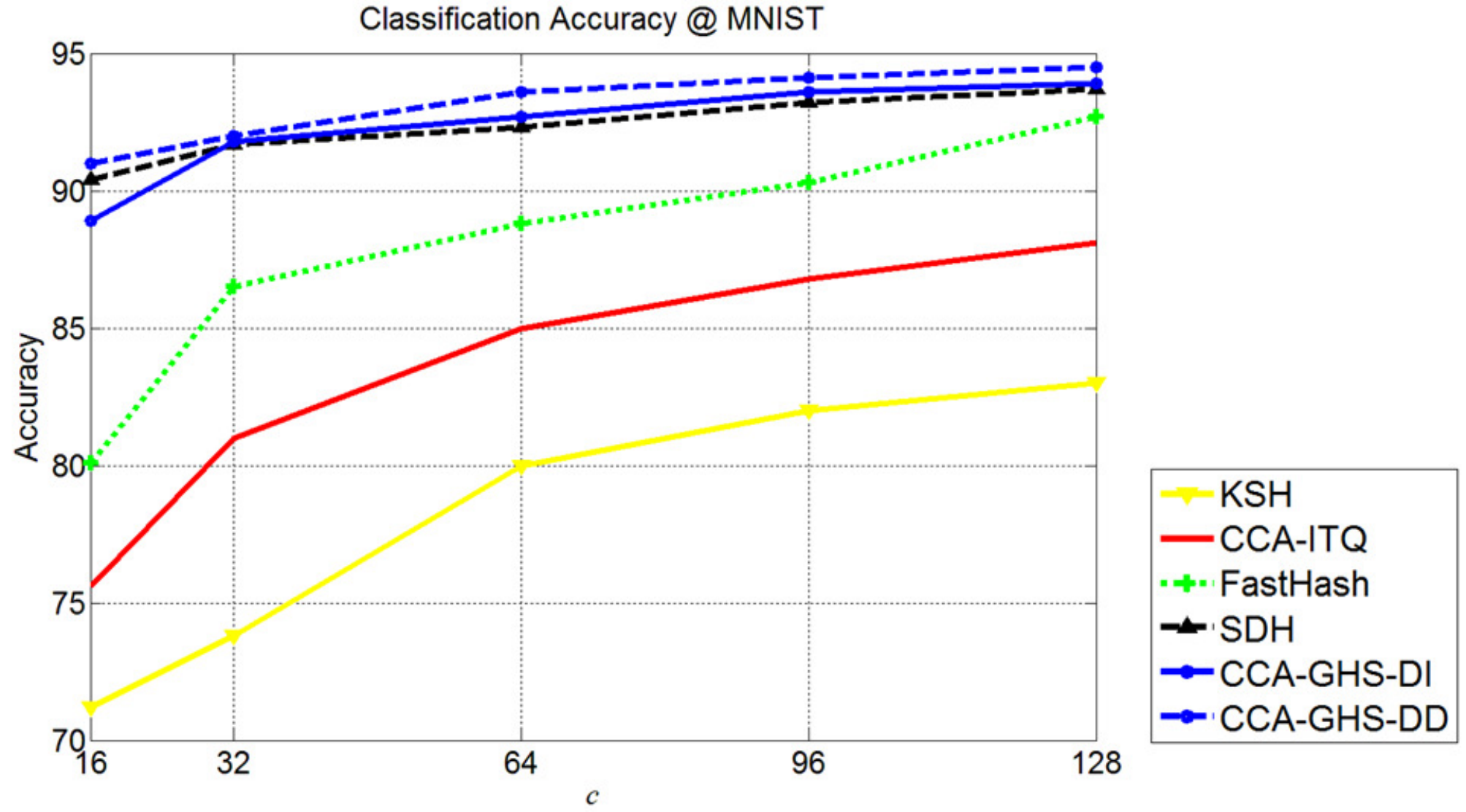}
\caption{Classification accuracy (\%) on MNIST}
\label{fig:classification}
\end{center}
\end{figure}
\subsection{Classification with hashing codes}
In this subsection, the MNIST dateset is used for evaluate the performance of the learned hashing codes by compared methods. The MNIST dataset consists of 70, 000 images, each of which is 784-dimensional. These images are handwritten digits from `0' to `9'. BRE, CCA-ITA, KSH, FastHash and SDH are used as baselines.\\
\indent Linear Support Vector Machine (SVM) is applied on the hashing codes. The LIBLINEAR~\cite{libsvm} solver is used to train the SVM. The classification results are given in Fig.~\ref{fig:classification}. From Fig.~\ref{fig:classification}, it can be seen that both CCA-GHS-DD gets the highest classification accuracy over all hash bit length, while CCA-GHS-DI is the second best when $c>32$ but trail SDH in experiments on 32-bit hash codes.
\section{Conclusion}

We have proposed a novel hashing method based on and Shannon's Source Coding Theorem witch requires that the hashing codes should be longer than the embedding for original training data. To circumvent computation of pairwise distances between each pair of data points, we minimize the new formulation of quantization loss which is based on Global Positioning System (GPS). Data-dependent and data-independent methods are proposed to distribute the satellites. According to the experimental results on three scales of datasets, the data-dependent method (GHS-DD) was superior to other methods, and the data-independent method (GHS-DI) produced promising results in less training time. However, GHS-DD took a moderate length of time to train, and the demand on RAM was limited by the computation of the covariance matrix in PCA. By incorporating Canonical Correlation Analysis (CCA), the proposed methods can be used for supervised hashing. The performance of CCA-GHS-DI and CCA-GHS-DD are superior. Finally, the retained hashing codes are used for classification problem to further demonstrate the outstanding performance of the proposed methods. Future work will focus on improving the computational efficiency and investigating methods to train the model using a few samples from the whole dataset to handle larger datasets such as SIFT1B and Tiny 80M.

\bibliographystyle{IEEEtran}
\bibliography{Citation}
\end{document}